\documentclass[10pt,twocolumn,letterpaper]{article}
\usepackage{algorithm}
\usepackage{algorithmic}
\usepackage{cvpr}              

\definecolor{cvprblue}{rgb}{0.21,0.49,0.74}
\usepackage[pagebackref,breaklinks,colorlinks,allcolors=cvprblue]{hyperref}
\usepackage{multirow}
\usepackage[capitalize,noabbrev]{cleveref}
\usepackage{xcolor}
\usepackage{amsthm}
\newtheorem{lemma}{Lemma}
\newtheorem{corollary}{Corollary}
\newtheorem{proposition}{Proposition}
\DeclareMathOperator{\CE}{CE}
\DeclareMathOperator{\PPL}{PPL}
\DeclareMathOperator{\softmax}{softmax}
\def\paperID{21}
\def\confName{CVPR}
\def\confYear{2026}
\title{A KL Lens on Quantization: Fast, Forward-Only Sensitivity for Mixed-Precision SSM--Transformer Models}
\usepackage{float}
\usepackage{listings}
\author{Jason Kong\\
{\tt\small jckong@ucsd.edu}
\and
Nilesh Prasad Pandey\\
{\tt\small nppandey@ucsd.edu}
\and
Flavio Ponzina\\
{\tt\small fponzina@sdsu.edu}
\and
Tajana Rosing\\
{\tt\small tajana@ucsd.edu}
}

\begin{document}
\maketitle


\begin{abstract}
Deploying Large Language Models (LLMs) on edge devices faces severe computational and memory constraints, limiting real-time processing and on-device intelligence. Hybrid architectures combining Structured State Space Models (SSMs) with transformer-based LLMs offer a balance of efficiency and performance. Aggressive quantization can drastically cut model size and speed up inference, but its uneven effects on different components require careful management. In this work, we propose a lightweight, backpropagation-free, surrogate-based sensitivity analysis framework to identify hybrid SSM--Transformer components most susceptible to quantization-induced degradation. Relying solely on forward-pass metrics, our method avoids expensive gradient computations and retraining, making it suitable for situations where access to in-domain data is limited due to proprietary restrictions or privacy constraints. We also provide a formal analysis showing that the Kullback--Leibler (KL) divergence metric better captures quantization sensitivity for Language modeling tasks than widely adopted alternatives such as mean squared error (MSE) and signal-to-quantization-noise ratio (SQNR). Through extensive experiments on SSM and hybrid architectures, our ablation studies confirm that KL-based rankings align with observed performance drops and outperform alternative metrics. This framework enables the practical deployment of advanced hybrid models on resource-constrained edge devices with minimal accuracy loss. We further validate our approach with real-world on-device profiling on Intel Lunar Lake hardware, demonstrating that KL-guided mixed-precision achieves near-FP16 perplexity with model sizes and throughput competitive with Uniform INT4 on both CPU and GPU execution modes. Code is available at \url{https://github.com/jasonkongie/kl-ssm-quant}.

\end{abstract}

\section{Introduction}
Large Language Models (LLMs) based on the Transformer architecture~\cite{vaswani2017attention} have achieved remarkable success across NLP tasks, but their deployment on edge devices is constrained by high memory and computation demands. Recently, State Space Models (SSMs)~\cite{gu2023mamba} have emerged as efficient alternatives for long sequence modeling, offering $\mathcal{O}(n)$ sequence processing and favorable memory scaling. SSM-based architectures can maintain strong accuracy on sequence tasks while being more hardware-friendly, which makes them attractive for resource-constrained platforms. In parallel, \textit{hybrid models}~\cite{dong2024hymba,glorioso2024zamba} combining SSM and Transformer components have been proposed to capture the best of both worlds. These hybrids achieve competitive performance with improved efficiency, exemplified by models like Zamba~\cite{glorioso2024zamba} and Hymba~\cite{dong2024hymba} that interleave SSM and attention modules.

Despite their efficiency, even SSM-based hybrids at scale can contain hundreds of millions to billions of parameters, remaining impractical for direct deployment on mobile and IoT devices. Quantization~\cite{nagel2021white,krishnamoorthi2018quantizing,ashkboos2024quarot} is a proven approach to compress models by reducing the numerical precision of weights~\cite{nagel2020adaptive,dong2019hawq} and activations~\cite{xiao2023smoothquant,pandey2023softmax}. 4-bit uniform quantization of Transformers has become standard in industry for inference speed-ups, and recent studies demonstrate that even lower-bit weight quantization has become feasible for LLMs given proper handling of outlier features~\cite{dettmers2022llmint8,xiao2023smoothquant,yao2022zeroquant,pandey2026qmc}. However, applying uniform quantization to all parts of a hybrid SSM model can lead to disproportionate accuracy degradation in certain layers. Recent work~\cite{pierro2024mamba} notes that SSMs and hybrid models exhibit different quantization behavior than Transformers: for instance, the state transition in selective SSMs has highly sensitive internal activations, and SSM outputs contain large magnitude outliers not seen in Transformer outputs. These differences call for a more \emph{nuanced}, layer-wise approach to quantization.

In this paper, we demonstrate that quantizing SSM--Transformer models for edge deployment is both necessary and tractable. While prior mixed-precision techniques~\cite{pandey2023practical,dong2019hawq} have shown success in CNNs and Transformers by using gradient-based methods or signal-to-quantization-noise ratio (SQNR) for sensitivity analysis, these approaches fall short in the context of language modeling. In particular, we find that SQNR, though effective for convolutional architectures, fails to capture the true sensitivity of components in language models. Our key contribution is a novel, gradient-free sensitivity analysis method tailored for SSM--Transformer architectures. It operates entirely via forward-pass signals and reveals which layers truly require higher precision. This insight enables a post-training quantization pipeline that achieves substantial compression with minimal accuracy degradation, addressing a gap where existing mixed-precision strategies break down.

The remainder of the paper is organized as follows. 
\Cref{sec:related} reviews prior work on quantization for LLMs, SSMs, and hybrid architectures. 
\Cref{sec:method} presents our forward-pass sensitivity analysis, including per-layer metrics, mixed-precision assignment, and Kendall's~$\tau$ ranking. 
\Cref{sec:kl_vs_sqnr} formally shows that KL divergence is a tighter proxy for perplexity than SQNR. 
\Cref{sec:hymba-sensitivity} reports ablations on the Hymba hybrid model at the layer and parameter level. 
\Cref{sec:experiments} presents experiments and results, including layer-level correlation analysis across Mamba and hybrid architectures, confirming that $\mathrm{KL}_{\mathrm{student}\to\mathrm{teacher}}$ outperforms SQNR and other metrics. 
Finally, \Cref{sec:ondevice} demonstrates end-to-end deployment on Intel Lunar Lake hardware, achieving near-FP16 perplexity with latency and throughput competitive with Uniform INT4 on CPU and GPU.

\section{Related Work}
\label{sec:related}

Prior work on efficient sequence modeling can be viewed along two largely orthogonal axes: \textbf{model architecture} and \textbf{quantization strategy}. Along the first axis, recent systems include Transformer-based large language models (LLMs), state-space models (SSMs), and hybrids that combine the two. Along the second axis, quantization ranges from \textbf{homogeneous} schemes that use a single bit-width throughout the model to \textbf{mixed-precision} schemes that assign different precisions to different components based on sensitivity. Together, these two axes provide a useful way to understand how prior work trades off efficiency, accuracy, and hardware friendliness.

Early quantization work focused mainly on \emph{homogeneous quantization}, especially for Transformer models. Uniform low-bit quantization is useful because it maps well to integer hardware and simplifies deployment. Early work in vision showed that carefully calibrated 8-bit quantization can preserve accuracy \citep{krishnamoorthi2018quantizing}, and later work adapted similar ideas to LLMs using outlier-aware calibration and improved rounding methods \citep{nagel2021white}. However, large Transformers show strong layerwise variation in sensitivity, with some activations and weights exhibiting strong outlier behavior. As a result, forcing all layers to use the same precision can lead to noticeable degradation unless extra techniques such as per-layer clipping are used \citep{yang2023efficient}. This motivates going beyond one-size-fits-all precision assignment.

Mixed-precision quantization addresses this issue by allocating bit-widths non-uniformly across the network. The main idea is to keep sensitive parts at higher precision while quantizing more robust parts more aggressively. HAWQ is an early example, using Hessian-based sensitivity estimates to assign layerwise precisions and showing strong compression--accuracy trade-offs in CNNs \citep{dong2019hawq}. Similar ideas were later extended to LLMs. For example, \citet{pandey2023practical} propose a post-training method that uses a calibration set to allocate 4/8-bit weights across LLM layers, outperforming uniform int8 baselines on GLUE~\cite{wang2018glue} and SQuAD~\cite{rajpurkar2016squad}. Related work also explores low-bit fine-tuning \citep{dettmers2023qlora}, adaptive post-training rounding \citep{nagel2020adaptive}, and hardware-aware bit allocation \citep{wang2019haq}. Overall, these works show that mixed precision often gives a better accuracy--efficiency trade-off than homogeneous quantization for LLMs.

Quantization research has also started to extend beyond Transformers to \emph{state-space models}. SSMs offer a different efficiency profile, with linear-time sequence processing and constant-memory recurrent state updates, making them attractive for long-context and edge settings. Mamba shows that modern SSMs can achieve competitive sequence modeling performance at lower compute cost than Transformers on several long-range tasks \citep{gu2023mamba}. Quantization studies for these models are still limited, but existing results already suggest different sensitivity patterns from Transformers. \citet{pierro2024mamba} provide one of the first systematic studies of post-training quantization for Mamba-style recurrent LLMs, showing that uniform int8 quantization can retain strong performance when sensitive state-transition components are treated carefully, in some cases with quantization-aware training. At the same time, pushing all components to int4 leads to much larger degradation, suggesting that homogeneous quantization is mainly viable for SSMs at moderate precision.

These observations naturally motivate \emph{hybrid architectures} that combine SSM and Transformer components, where both architectural heterogeneity and precision heterogeneity become important. Recent models such as Hymba and Zamba interleave recurrent state-space computation with attention and feed-forward blocks to capture the efficiency advantages of SSMs while retaining Transformer flexibility \citep{dong2024hymba, glorioso2024zamba}. Such models are especially well suited to mixed-precision deployment because their submodules play different computational roles. In practice, recurrent state-transition matrices may need higher precision to preserve long-range dynamics, while feed-forward and attention-related components can often be quantized more aggressively. This makes hybrid models a natural setting for sensitivity-guided precision allocation.

Overall, prior work points to two consistent themes. First, quantization sensitivity is highly non-uniform, making homogeneous precision increasingly suboptimal as models become larger and more diverse. Second, this non-uniformity becomes even stronger in hybrid SSM--Transformer architectures, where recurrent and attention-based components have different numerical requirements. Our work builds on this intersection and studies how sensitivity-guided precision assignment can better exploit the structure of hybrid sequence models.

\begin{figure*}[t]
  \centering
  \begin{subfigure}{0.48\textwidth}
    \centering
    \includegraphics[width=\linewidth]{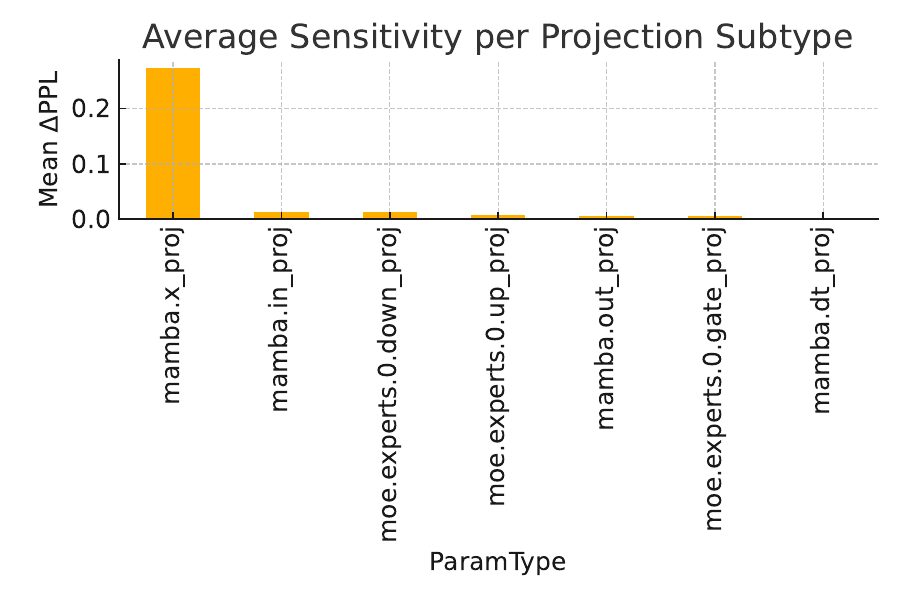}
  \end{subfigure}
  \hfill
  \begin{subfigure}{0.48\textwidth}
    \centering
    \includegraphics[width=\linewidth]{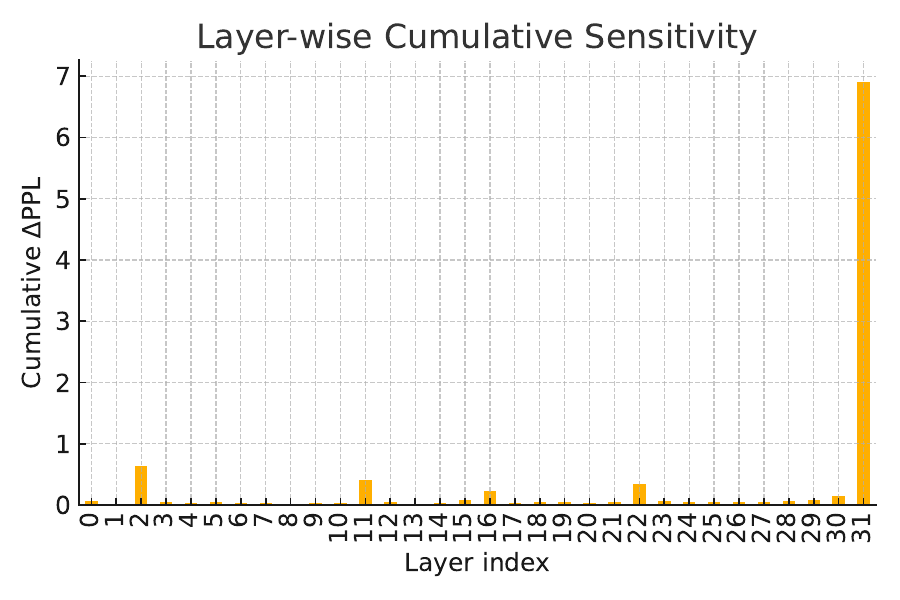}
  \end{subfigure}
  \caption{Quantization-sensitivity characteristics of Hymba.
           \textbf{(a)} Average ablation sensitivity
           $\overline{\Delta}\!\operatorname{PPL}$ for each projection
           subtype, with \texttt{mamba.x\_proj} dominating the profile.
           \textbf{(b)} Layer-wise cumulative sensitivity
           $\Sigma\Delta\!\operatorname{PPL}$, showing a sharp peak at
           block 31.}
  \label{fig:hymba-sensitivity-two}
\end{figure*}
\section{Methodology}
\label{sec:method}
We propose a lightweight sensitivity analysis method that evaluates each layer or module of a model for quantization robustness \emph{without requiring backpropagation}. In contrast, our method uses forward-pass computations only, making it scalable to very large models and avoiding any need for retraining or fine-tuning during the analysis phase.

Our core idea is to measure how much quantization of a single layer affects the model's output or performance metric. Given a trained model, we perform a series of modified inference passes over a representative dataset:

\begin{algorithm}
\caption{\textsc{KL-PPL Correlation via Kendall $\tau$}}
\label{alg:kl_ppl_corr}
\begin{algorithmic}[1]
\REQUIRE Model $\mathcal{M}$ with $L$ layers; quantization function $\textsc{Quantize}(\cdot)$; evaluation dataset $\mathcal{D}$
\ENSURE Kendall $\tau$ and $p$-value between KL and PPL rankings
\STATE Initialize lists: $\text{KL} \gets []$, $\text{PPL} \gets []$
\FOR{$\ell = 1$ \TO $L$}
  \STATE $\text{Teacher} \gets \mathcal{M}$
  \STATE $\text{Student} \gets \textsc{Quantize}(\mathcal{M}, \ell)$
  \STATE $\text{logits}_{T}, \text{PPL}_{T} \gets \textsc{Evaluate}(\text{Teacher}, \mathcal{D})$
  \STATE $\text{logits}_{S}, \text{PPL}_{S} \gets \textsc{Evaluate}(\text{Student}, \mathcal{D})$
  \STATE $\text{KL}_{\ell} \gets \textsc{ComputeKL}(\text{logits}_{T}, \text{logits}_{S})$
  \STATE Append $\text{KL}_{\ell}$ to $\text{KL}$; append $\text{PPL}_{S}$ to $\text{PPL}$
\ENDFOR
\STATE $\pi_{\mathrm{KL}} \gets \textsc{Argsort}(\text{KL}, \downarrow)$
\STATE $\pi_{\mathrm{PPL}} \gets \textsc{Argsort}(\text{PPL}, \downarrow)$
\STATE $(\tau, p) \gets \textsc{KendallTau}(\pi_{\mathrm{KL}}, \pi_{\mathrm{PPL}})$
\RETURN $(\tau, p)$
\end{algorithmic}
\end{algorithm}

\begin{table}[H]
\centering
\caption{Sort direction for each metric ($\uparrow$: ascending, $\downarrow$: descending).}
\label{tab:sorting_directions}
\begin{tabular}{lc}
\toprule
\textbf{Metric} & \textbf{Order} \\ \midrule
Perplexity                       & $\downarrow$ \\
SQNR (dB)                        & $\uparrow$  \\
KL$_{\text{teacher}\to\text{student}}$  & $\downarrow$ \\
KL$_{\text{student}\to\text{teacher}}$  & $\downarrow$ \\
$\Delta$ Cross Entropy           & $\downarrow$ \\ \bottomrule
\end{tabular}
\end{table}

This requires no gradients and is easily parallelizable.

\subsection{Efficient Mixed-Precision Assignment}
Given sensitivity scores, we assign a low bit-width to most layers and a higher bit-width to the top-$k$ sensitive ones. This yields significant compression with minimal accuracy loss.
Given model output and input logits, we perform a comprehensive analysis to discover a correlation between perplexity and metrics discussed in \cref{sec:kl_vs_sqnr}.

\subsection{Overview of Kendall's $\tau$ Ranking}
\label{sec:kendall}
Kendall's $\tau$~\cite{kendall1938} is a non-parametric statistical measure that quantifies the ordinal correlation between two rankings. Specifically, given two rankings $R_1$ and $R_2$ of $n$ elements, Kendall's $\tau$ is defined as:
\[
\tau(R_1, R_2) = \frac{|\mathcal{C}| - |\mathcal{D}|}{\frac{1}{2} n (n - 1)},
\]
where $\mathcal{C}$ and $\mathcal{D}$ represent the sets of concordantly and discordantly ranked pairs, respectively. The coefficient ranges from $-1$ (complete disagreement) to $+1$ (complete agreement), with $0$ indicating no correlation. Unlike Pearson's correlation, Kendall's $\tau$ relies solely on rank order, making it particularly suitable for assessing ranking-based sensitivity metrics.

\section{Sensitivity Scoring in SSMs}

\subsection{Analytical Comparison of \texorpdfstring{$D_{\mathrm{KL}}$}{KL} and SQNR as Sensitivity Metrics for Mixed-Precision Language Models}
\label{sec:kl_vs_sqnr}

Recent works~\cite{pandey2023practical,yang2023efficient} establish the \emph{signal-to-quantization-noise ratio} (SQNR) as a reliable sensitivity measure for CNNs and transformer-based models. SQNR quantifies the distortion introduced by quantizing model logits compared to the original (teacher) logits, expressed in decibels (dB):
\[
\mathrm{SQNR} = 10 \log_{10}\frac{\mathbb{E}[\,\ell_{\mathrm{orig}}^2\,]}{\mathbb{E}[(\ell_{\mathrm{orig}}-\ell_{\mathrm{quant}})^2]}.
\]
Higher SQNR values signify that quantization retains the original activations more accurately.

However, for autoregressive language models, we demonstrate that the \emph{Kullback--Leibler divergence}~\cite{kullback1951} ($D_{\mathrm{KL}}$) generalizes better than SQNR. KL divergence measures distributional shifts due to quantization in both directions to account for potential asymmetries:
\[
\mathrm{KL}(P \| Q) = \mathbb{E}_{P}\left[\log P - \log Q\right],
\]
where $P = \mathrm{softmax}(\ell_{\mathrm{orig}})$ and $Q = \mathrm{softmax}(\ell_{\mathrm{quant}})$. Higher KL values indicate greater divergence between the original and quantized distributions.

While SQNR captures signal fidelity at the logit level, it does not reliably correspond to downstream performance in language tasks. In contrast, KL divergence provides a tighter link to the change in perplexity ($\PPL$), making it more suitable for sensitivity scoring in autoregressive settings.

\paragraph{Notation.}
Let $x$ be a context and $y\!\in\!V$ a token.
Throughout,
\[
\begin{aligned}
  p(y\mid x)            &:\; \text{full-precision \emph{teacher} model},\\
  q_{\theta}(y\mid x)   &:\; \text{quantized model}.
\end{aligned}
\]
Expectations $\mathbb{E}_{p}[\cdot]$ are taken over $(x,y)\!\sim\!p$.
As all expectations $\mathbb{E}_{p}[\,\cdot\,]$ are taken with respect to the
teacher, the results are \emph{relative} to that uncompressed model.\footnote{If we use the test-set distribution for $p$, the formulas stay the same.
The only change is that $H(p)$ now depends on that specific test data.}

\vspace{2pt}
\begin{lemma}[Cross-entropy split]\label{lem:ce_decomp}
\begin{align}
\CE(q_{\theta},p)
  &= -\mathbb{E}_{p}\![\log q_{\theta}(y\mid x)]    \\[2pt]
  &= H(p) + D_{\mathrm{KL}}(p\!\parallel\!q_{\theta}).
\end{align}
\end{lemma}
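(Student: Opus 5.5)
The plan is to add and subtract $\log p(y\mid x)$ inside the expectation that defines $\CE(q_{\theta},p)$, and then read off each piece from its definition. The first line of the lemma is simply the definition of cross-entropy, taken with respect to the teacher joint $(x,y)\sim p$. Here $H(p)$ is interpreted as the conditional entropy $-\mathbb{E}_{p}[\log p(y\mid x)]$. Likewise, $D_{\mathrm{KL}}(p\parallel q_{\theta})$ is the conditional KL, $\mathbb{E}_{p}[\log p(y\mid x)-\log q_{\theta}(y\mid x)]$. This is the same expectation-form definition of $\mathrm{KL}(P\|Q)$ given earlier in this section, now averaged over contexts $x$.

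\textbf{Step 1.} Write
\[
-\log q_{\theta}(y\mid x) = -\log p(y\mid x) + \bigl(\log p(y\mid x)-\log q_{\theta}(y\mid x)\bigr).
\]

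\textbf{Step 2.} Take $\mathbb{E}_{p}$ of both sides. By linearity, the first term gives $H(p)$ and the second gives $D_{\mathrm{KL}}(p\parallel q_{\theta})$. To make this explicit, condition on $x$: for each fixed context, the inner sum over $y\in V$ weighted by $p(y\mid x)$ is the per-context entropy plus the per-context KL. Averaging over the context marginal then recovers both terms.

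\textbf{Main obstacle.} The only real difficulty is checking that the expectations are finite, so that splitting by linearity is legitimate. Because $V$ is finite, $H(p)\le\log|V|$ is always finite. If $q_{\theta}$ is produced by a softmax, as in the paper's setting, then $q_{\theta}(y\mid x)>0$ for every $y$. Hence $-\log q_{\theta}$ is finite, and the KL term is finite as well. Tokens with $p(y\mid x)=0$ contribute nothing, using the convention $0\log 0=0$.

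For a general $q_{\theta}$ that may assign zero mass, we would instead observe that both sides equal $+\infty$ exactly when $q_{\theta}(y\mid x)=0$ for some $y$ with $p(y\mid x)>0$, so the identity still holds in the extended reals. Finally, by the footnote, the same computation goes through verbatim with $p$ replaced by the test-set distribution.
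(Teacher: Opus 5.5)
Your proof is correct and uses the same standard argument the paper relies on: the paper states this lemma without proof, and the implicit justification is exactly your step of adding and subtracting $\log p(y\mid x)$ and applying linearity, with $H(p)$ and $D_{\mathrm{KL}}(p\parallel q_{\theta})$ read as context-averaged conditional quantities. One small imprecision: positivity of a softmax $q_{\theta}$ only makes $-\log q_{\theta}$ finite pointwise, not in expectation over an infinite context space; this does not affect the identity, because $H(p)\le\log|V|$ is finite, so the linearity split is valid in the extended reals either way.
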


\begin{lemma}[PPL factorization]\label{lem:ppl_fact}
\begin{align}
\PPL(q_{\theta})
  &= \exp\!\bigl(\CE(q_{\theta},p)\bigr) \\[2pt]
  &= \PPL(p)\,
     \exp\!\bigl(D_{\mathrm{KL}}(p\!\parallel\!q_{\theta})\bigr),
\end{align}
where $\PPL(p)=\exp(H(p))$.
\end{lemma}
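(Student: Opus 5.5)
The plan is to treat the first equality as the definition of perplexity and derive the second directly from Lemma~\ref{lem:ce_decomp}. Perplexity is the exponentiated expected negative log-likelihood. Under the paper's convention that expectations are over $(x,y)\sim p$, we have $\PPL(q_{\theta}) = \exp\bigl(-\mathbb{E}_{p}[\log q_{\theta}(y\mid x)]\bigr)$. By the first line of Lemma~\ref{lem:ce_decomp}, this is exactly $\exp\bigl(\CE(q_{\theta},p)\bigr)$. The same definition applied to the teacher gives $\PPL(p)=\exp\bigl(-\mathbb{E}_{p}[\log p(y\mid x)]\bigr)=\exp(H(p))$. Here $H(p)$ is read as the conditional entropy of $y$ given $x$, averaged over contexts, so that it matches the $H(p)$ in Lemma~\ref{lem:ce_decomp}.

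For the second equality I substitute the split $\CE(q_{\theta},p)=H(p)+D_{\mathrm{KL}}(p\parallel q_{\theta})$ into the exponent. The exponential turns the sum into a product:
\[
\exp\bigl(H(p)+D_{\mathrm{KL}}(p\parallel q_{\theta})\bigr)=\exp(H(p))\,\exp\bigl(D_{\mathrm{KL}}(p\parallel q_{\theta})\bigr)=\PPL(p)\,\exp\bigl(D_{\mathrm{KL}}(p\parallel q_{\theta})\bigr).
\]
This completes the factorization.

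There is no real obstacle here. The only point needing care is well-definedness. If $q_{\theta}(y\mid x)=0$ on a set where $p$ has positive mass, then the cross-entropy and the KL divergence are both $+\infty$, and the identity still holds in the extended sense. For the quantized softmax model, $q_{\theta}>0$ everywhere, so all quantities are finite. I would also note, following the footnote, that if the empirical test distribution replaces $p$ in the expectations, the same two-line argument goes through unchanged. In that case $H(p)$ denotes the entropy term attached to that data distribution.
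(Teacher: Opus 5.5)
Your proposal is correct and takes the same route as the paper, which states this lemma without a separate proof as an immediate consequence of Lemma~\ref{lem:ce_decomp}. The argument in both cases is to read $\PPL$ as the exponentiated cross-entropy, substitute the split $H(p)+D_{\mathrm{KL}}(p\parallel q_{\theta})$, and factor the exponential; your remarks on the $+\infty$ case and the footnote's test-distribution variant are sound additions that the paper does not make.
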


\begin{corollary}[Teacher-relative PPL bound]\label{cor:ppl_bound}
If $D_{\mathrm{KL}}(p\!\parallel\!q_{\theta})\le\varepsilon$ then
\[
  \PPL(q_{\theta}) \le \PPL(p)\,e^{\varepsilon}.
\]
\end{corollary}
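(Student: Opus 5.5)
The plan is to obtain the bound directly from \cref{lem:ppl_fact}, since that lemma already expresses the quantized perplexity as the teacher perplexity times an exponential factor in the KL divergence. Starting from
\[
\PPL(q_{\theta}) = \PPL(p)\,\exp\!\bigl(D_{\mathrm{KL}}(p\!\parallel\!q_{\theta})\bigr),
\]
it suffices to bound the second factor. The hypothesis $D_{\mathrm{KL}}(p\!\parallel\!q_{\theta})\le\varepsilon$ and the monotonicity of $t\mapsto e^{t}$ give $\exp\!\bigl(D_{\mathrm{KL}}(p\!\parallel\!q_{\theta})\bigr)\le e^{\varepsilon}$.

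Next I multiply both sides by $\PPL(p)$. This step needs $\PPL(p)=\exp(H(p))>0$, which holds because the exponential is always positive. Multiplying an inequality by a positive number preserves its direction, so the claimed bound $\PPL(q_{\theta})\le \PPL(p)\,e^{\varepsilon}$ follows.

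The only place I expect any care to be needed is a well-definedness check rather than a real obstacle. If $q_{\theta}$ assigns zero probability to some $y$ with $p(y\mid x)>0$, then the KL divergence is $+\infty$, and the hypothesis with a finite $\varepsilon$ excludes that case. The entropy $H(p)$ is finite for a finite vocabulary $V$, so both sides of \cref{lem:ppl_fact} are finite and the manipulation is legitimate.

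Finally, I would note that by \cref{lem:ce_decomp} the hypothesis is equivalent to $\CE(q_{\theta},p)\le H(p)+\varepsilon$. This gives the same conclusion after exponentiating, which serves as a consistency check.
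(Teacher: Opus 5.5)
Your proposal is correct and matches the paper's intended argument. The paper gives no separate proof: the corollary follows immediately from \cref{lem:ppl_fact} by monotonicity of $t\mapsto e^{t}$ and positivity of $\PPL(p)$, and your remarks on infinite KL and finite $H(p)$ are a harmless well-definedness check that the paper omits.
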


\begin{proposition}[SQNR Is Not Monotonic in $\PPL$]
\label{prop:sqnr_non_monotone}
There exist logit pairs $\bigl(z^{(1)},\hat z^{(1)}\bigr)$ and $\bigl(z^{(2)},\hat z^{(2)}\bigr)$ such that
\begin{align}
\mathrm{SQNR}\!\bigl(z^{(1)},\hat z^{(1)}\bigr)
&< \mathrm{SQNR}\!\bigl(z^{(2)},\hat z^{(2)}\bigr), \\[4pt]
\PPL^{(1)} &< \PPL^{(2)}.
\end{align}
\end{proposition}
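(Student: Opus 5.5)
The plan is to exploit the fact that SQNR is not invariant under shifts of the logits, while the softmax (and hence $D_{\mathrm{KL}}$ and $\PPL$) is. I would build both pairs from the \emph{same} teacher logits $z:=z^{(1)}=z^{(2)}$. This makes $\PPL(p)$ identical in the two cases, so that by Lemma~\ref{lem:ppl_fact} the comparison of $\PPL^{(1)}$ and $\PPL^{(2)}$ reduces to comparing $D_{\mathrm{KL}}(p\parallel q^{(1)})$ and $D_{\mathrm{KL}}(p\parallel q^{(2)})$. Here $p=\softmax(z)$ and $q^{(i)}=\softmax(\hat z^{(i)})$, and $\PPL^{(i)}:=\PPL(p)\exp\bigl(D_{\mathrm{KL}}(p\parallel q^{(i)})\bigr)$.

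\textbf{Concrete construction.} Take a two-token vocabulary and $z=(a,0)$ with $a>0$. For pair~1, let $\hat z^{(1)}=z+c\,(1,1)$ with $c>0$. Since $\softmax$ is shift-invariant, $q^{(1)}=p$, so $D_{\mathrm{KL}}(p\parallel q^{(1)})=0$ and $\PPL^{(1)}=\PPL(p)$. The noise energy, however, is $\mathbb{E}[(z-\hat z^{(1)})^2]=c^2$, which gives
\[
\mathrm{SQNR}^{(1)}=10\log_{10}\frac{a^2/2}{c^2}.
\]
For pair~2, let $\hat z^{(2)}=(a-\delta,\delta)$ with $0<\delta<c$. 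The noise energy is then $\delta^2$, which gives
\[
\mathrm{SQNR}^{(2)}=10\log_{10}\frac{a^2/2}{\delta^2}>\mathrm{SQNR}^{(1)}.
\]

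\textbf{Key step: strict KL positivity for pair~2.} The logit gap changes from $a$ to $a-2\delta\neq a$, so $q^{(2)}\neq p$. By Gibbs' inequality (strict positivity of KL for distinct distributions), $D_{\mathrm{KL}}(p\parallel q^{(2)})>0$. Hence
\[
\PPL^{(2)}=\PPL(p)\,e^{D_{\mathrm{KL}}(p\parallel q^{(2)})}>\PPL(p)=\PPL^{(1)},
\]
which is exactly the pair of inequalities in Proposition~\ref{prop:sqnr_non_monotone}.

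\textbf{Main obstacle.} The only real subtlety is making $\PPL^{(i)}$ well defined and comparable across the two pairs. Fixing a common teacher and invoking Lemma~\ref{lem:ppl_fact} handles this. If a more ``realistic'' example is desired, with no pure constant shift, I would perturb $\hat z^{(1)}$ slightly off the shift direction. Continuity of KL and SQNR in $\hat z$ preserves both strict inequalities. A larger $\delta$ can also be chosen, up to flipping the argmax when $\delta>a/2$, to make the PPL gap dramatic while still keeping $\delta<c$.
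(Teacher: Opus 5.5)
Your proof is correct and uses the same idea as the paper's. A constant shift $\hat z = z + c\mathbf{1}$ leaves $\softmax$, and hence $D_{\mathrm{KL}}$ and $\PPL$, unchanged while driving SQNR down without bound. Your version is more complete than the paper's proof, which only shows SQNR $\to -\infty$ at unchanged $\PPL$ and never exhibits a second pair: your common teacher, the explicit $\hat z^{(2)}=(a-\delta,\delta)$, and Gibbs' inequality are what secure the strict inequality $\PPL^{(1)}<\PPL^{(2)}$.
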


\begin{proof}
\emph{Constant-shift example:}
Let $\hat z = z + c\mathbf{1}$ with any $c\!\neq\!0$.
Since $\softmax(z)=\softmax(\hat z)$, the models share the same $\PPL$, but
\begin{align}
\mathrm{SQNR}(z,\hat z)
  &= 10 \log_{10}\!
     \frac{\lVert z \rVert_{2}^{2}}
          {\lVert z - \hat z \rVert_{2}^{2}} \\[4pt]
  &= 10 \log_{10}\!
     \frac{\lVert z \rVert_{2}^{2}}
          {c^{2}\,\lVert \mathbf{1} \rVert_{2}^{2}} \\[4pt]
  &\xrightarrow{|c|\;\to\;\infty} -\infty.
\end{align}
So SQNR can be made arbitrarily small while $\PPL$ is unchanged.
\end{proof}

\subsection{Metric Asymmetry}

We observe a strong asymmetry between the two KL directions used to compare
student and teacher distributions. While
$KL_{\text{student}\rightarrow\text{teacher}}$ aligns well with the
perplexity-based sensitivity ranking, the reverse direction
$KL_{\text{teacher}\rightarrow\text{student}}$ shows a weak negative
correlation ($\tau \approx -0.14$).

This difference arises because the two directions penalize different types of
errors. $KL_{\text{student}\rightarrow\text{teacher}}$ strongly penalizes
probability mass that the student assigns to tokens the teacher considers
unlikely—precisely the type of distortion introduced by quantization.
In contrast, $KL_{\text{teacher}\rightarrow\text{student}}$ focuses on whether
the student underestimates the teacher’s high-probability tokens, which tends
to remain relatively stable under quantization. As a result, the reverse KL is
less sensitive to the perturbations that drive perplexity degradation.

\paragraph{Practical Implication.}
As shown above, the \emph{KL divergence gap}
$\Delta D_{\mathrm{KL}}(p \,\|\, q_{\theta})$ provides an upper bound proxy for perplexity, making it the preferred
layer-sensitivity metric for language modelling tasks.
Hence, in mixed-precision quantization of language models, precision
allocation should be driven by $\Delta D_{\mathrm{KL}}$.

\label{sec:experiments}

\begin{figure*}[t]
  \centering
  \begin{subfigure}{0.5\textwidth}
    \centering
    \includegraphics[width=\linewidth]{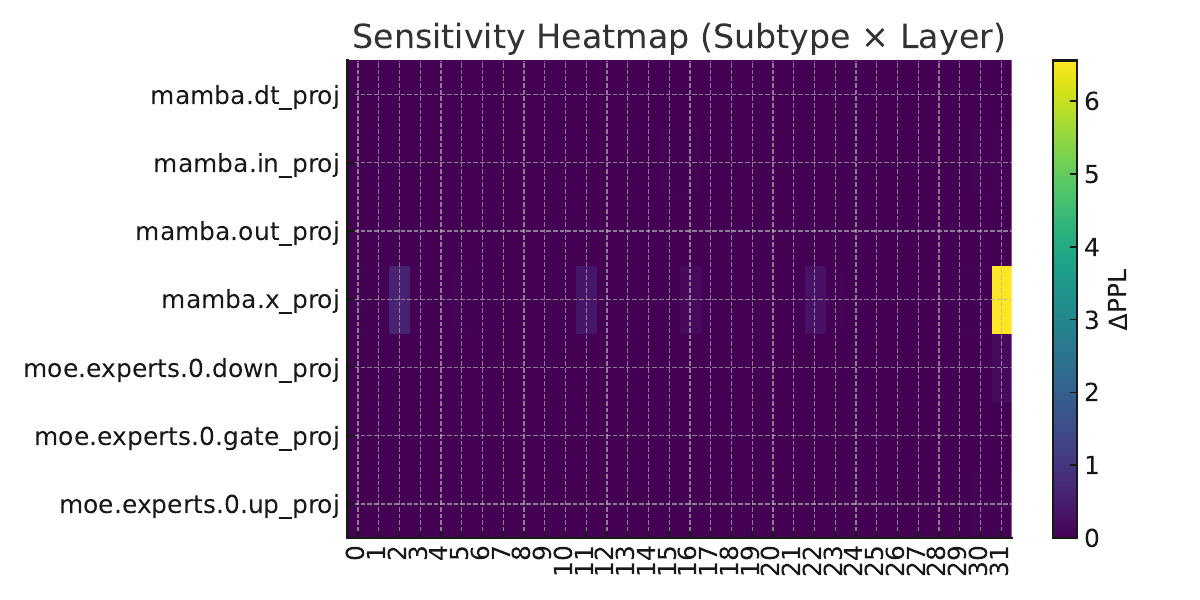}
    \caption{Sensitivity heat-map ($\Delta\!\operatorname{Perplexity}$;
             rows: projection subtype, columns: layer).  Late-stage
             \texttt{x\_proj} peaks coincide with moderate MoE projection
             peaks, underscoring their coupled importance for accurate
             mixed-precision deployment.}
    \label{fig:heatmap}
  \end{subfigure}
  \hfill
  \begin{subfigure}{0.48\textwidth}
    \centering
    \includegraphics[width=\linewidth]{hymba_layerwise.pdf}
    \caption{Layer-wise cumulative sensitivity $\Sigma\Delta\!\operatorname{PPL}$,
             showing a sharp peak at block 31.}
    \label{fig:layer}
  \end{subfigure}
  \caption{Quantization-sensitivity characteristics of Hymba.}
  \label{fig:hymba-sensitivity-combined}
\end{figure*}

\section{Ablations Studies on Hymba Hybrid Model}
\subsection{Layer- and Parameter-Level Quantization Sensitivity in Hymba}
\label{sec:hymba-sensitivity}

We present a systematic, tensor-level ablation study of the 32-layer
Hymba model~\cite{dong2024hymba}.  Although similar analyses
are common for CNNs and pure-Transformer models, we are not aware of a
prior, publicly documented examination of quantization sensitivity in
hybrid SSM--Transformer architectures; our findings therefore offer a
reference point for subsequent research on this emerging model class.
All experiments are conducted under a \textbf{uniform 4-bit
(INT4) quantization scheme}, a setting that is now widely supported on
commercial mobile and server-class accelerators.

\paragraph{Subtype imbalance.}
Among the seven projection sub-modules, the cross-projection
\texttt{mamba.x\_proj} exhibits the highest average sensitivity
$(\overline{\Delta}=0.27)$, exceeding all other subtypes by more
than an order of magnitude $(\overline{\Delta}\le 0.014)$.
Conversely, \texttt{mamba.dt\_proj} is nearly insensitive
$(\overline{\Delta}=3.6\times10^{-4})$, suggesting that extremely
low-precision representations are feasible (\cref{fig:heatmap}).

\paragraph{Localized layer hotspots.}
Aggregating $\Delta\!\operatorname{Perplexity}$ across sub-modules reveals
that block 31 alone accounts for over 70\% of the aggregate
sensitivity budget $(\Sigma\Delta=6.9)$, with smaller peaks in
blocks 2, 11, 16, and 22 (\cref{fig:layer}).  Most remaining
blocks tolerate aggressive quantization.

\paragraph{Coupled \texttt{x\_proj}--MoE behaviour.}
The heat-map in \cref{fig:heatmap} shows that late-stage
\texttt{x\_proj} sensitivities co-occur with moderate MoE
\texttt{up/down} sensitivities in the same block, indicating that
quantization noise in mixing pathways can be amplified by expert
routing.

\subsection{Ranking Correlation Protocol}

We evaluate how effectively each sensitivity metric predicts the quantization impact at a per-layer granularity, defined by the increase in perplexity ($\PPL\uparrow$) when that layer alone is quantized to \texttt{int8}. Following prior practice~\cite{pandey2023practical}, we establish the ground-truth ranking based on $\Delta \PPL$ and measure the alignment with proxy metrics---namely, SQNR, KL divergences (teacher-to-student and student-to-teacher), and $\Delta$CE---using Kendall's $\tau$ coefficient (see \cref{sec:kendall}). Metrics with higher $\tau$ values better reflect the true sensitivity of layers.

\section{Experiments \& Results}

\subsection{Layer-Level Correlation Analysis}
\cref{tab:kendall_tau} summarizes the Kendall's $\tau$ results across several hybrid SSM models of varying scales. Notably, the student-to-teacher KL divergence consistently achieves the highest correlation, averaging a Kendall's $\tau$ of \textbf{0.79}, surpassing the widely used SQNR metric (average $\tau = 0.76$). Statistical significance tests (paired one-sided t-tests) further validate that KL divergence significantly outperforms SQNR ($p<10^{-6}$). This improved correlation is explained by KL divergence operating directly in probability space, aligning closely with the downstream metric of interest (perplexity), whereas SQNR measures only logit-level fidelity and can overlook distributional shifts (\cref{prop:sqnr_non_monotone}).

\begin{table*}[h]
  \caption{Kendall's $\tau$ values for each metric across models.}
  \label{tab:kendall_tau}
  \centering
  \small
  \resizebox{\linewidth}{!}{%
  \begin{tabular}{lccccccc}
    \toprule
    Metric & Mamba-130M & Mamba-380M & Mamba-1.4B & Mamba2-130M & Hymba & Zamba & Avg. \\
    \midrule
    SQNR (dB) & $0.6967$ & $0.7314$ & $0.7941$ & $0.4898$ & $0.7124$ & $0.8419$ & $0.7111$ \\
    KL$_{\mathrm{teacher}\!\to\!\mathrm{student}}$ & $-0.1980$ & $-0.2316$ & $-0.2607$ & $-0.0272$ & $-0.1617$ & $0.1142$ & $-0.1275$ \\
    KL$_{\mathrm{student}\!\to\!\mathrm{teacher}}$ & $\mathbf{0.8419}$ & $\mathbf{0.7936}$ & $\mathbf{0.8327}$ & $\mathbf{0.8078}$ & $\mathbf{0.6646}$ & $\mathbf{0.8060}$ & $\mathbf{0.7911}$ \\
    $\Delta$CE & $-0.1400$ & $-0.1662$ & $-0.0679$ & $0.2364$ & $-0.1260$ & $-0.1234$ & $-0.0645$ \\
    \bottomrule
  \end{tabular}
  }
\end{table*}

\begin{table*}[h]
  \caption{p-values for each metric across models.}
  \label{tab:p_values}
  \centering
  \small
  \begin{tabular}{lcccccc}
    \toprule
    Metric & Mamba-130M & Mamba-380M & Mamba-1.4B & Mamba2-130M & Hymba & Zamba \\
    \midrule
    SQNR (dB) & $4.97\times10^{-24}$ & $1.5\times10^{-51}$ & $1.91\times10^{-60}$ & $6.869\times10^{-7}$ & $5.87\times10^{-57}$ & $4.57\times10^{-52}$ \\
    KL$_{\mathrm{teacher}\!\to\!\mathrm{student}}$ & $4.06\times10^{-3}$ & $1.729\times10^{-6}$ & $7.30\times10^{-8}$ & $7.827\times10^{-1}$ & $3.07\times10^{-4}$ & $3.95\times10^{-2}$ \\
    KL$_{\mathrm{student}\!\to\!\mathrm{teacher}}$ & $2.51\times10^{-34}$ & $2.29\times10^{-60}$ & $2.84\times10^{-66}$ & $2.637\times10^{-16}$ & $8.31\times10^{-50}$ & $7.1\times10^{-48}$ \\
    $\Delta$CE & $4.21\times10^{-2}$ & $5.97\times10^{-4}$ & $1.61\times10^{-1}$ & $1.656\times10^{-2}$ & $4.92\times10^{-3}$ & $2.61\times10^{-2}$ \\
    \bottomrule
  \end{tabular}
\end{table*}

\begin{figure}[t]
    \centering
    \includegraphics[width=0.95\linewidth]{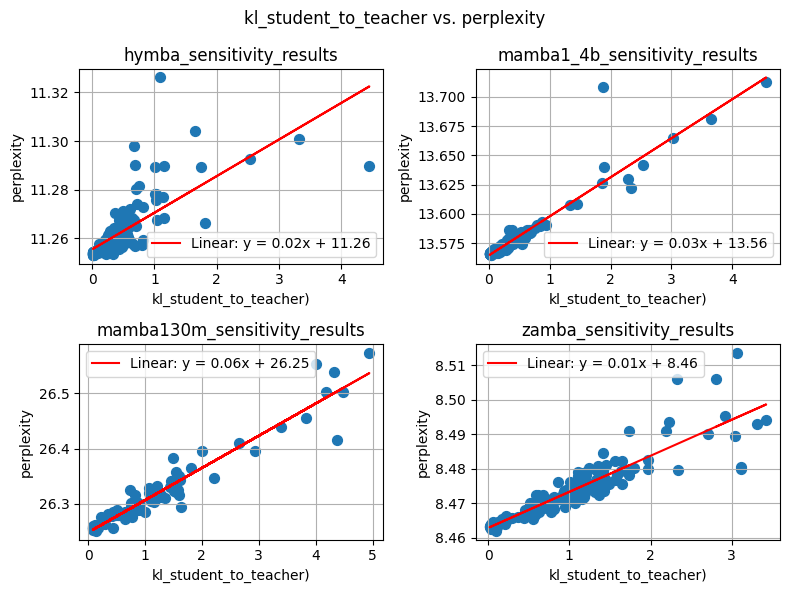}
    \caption{KL-Divergence vs.\ Perplexity correlation.}
    \label{fig:kl-ppl-corr}
\end{figure}

\subsection{Ranking-Correlation Algorithm}
\label{sec:pseudocode}

We detail the Kendall's $\tau$ ranking-correlation algorithm used to measure the alignment between our sensitivity metrics and the observed perplexity ($\PPL$) change.

\paragraph{Ranking Protocol.} We first sort layers by the observed $\Delta \PPL$ as the ground truth ranking. We then compare this ranking against rankings obtained from each sensitivity metric.





We validate this approach end-to-end with real-world profiling on
\textbf{Intel Lunar Lake} hardware~\cite{intel_lunar_lake}, covering
CPU and GPU execution across two Mamba model scales, and show that
KL-guided mixed-precision achieves near-FP16/INT8 perplexity while
matching or exceeding INT4 throughput, a combination that homogeneous
quantization cannot provide.



\section{On-Device Profiling of Mixed-Precision Quantization}
\label{sec:ondevice}

\subsection{Experimental Setup}
\label{sec:ondevice-setup}

We profile models end-to-end on an Intel \textbf{Lunar Lake}
platform~\cite{intel_lunar_lake}, which integrates a CPU, dedicated GPU,
and Neural Processing Unit (NPU) on the same die, making it
representative of next-generation AI-capable edge clients.
Models are converted to OpenVINO IR via the XAMBA
framework~\cite{xamba2024} and benchmarked with the OpenVINO
\texttt{benchmark\_app} (latency mode, 100 iterations).
We evaluate \textbf{Mamba-130M} and \textbf{Mamba-1.4B}~\cite{gu2023mamba}
on CPU, and \textbf{Mamba2-130M}~\cite{gu2023mamba} on GPU, comparing
against \textbf{FP16}, \textbf{Uniform INT8}, and \textbf{Uniform INT4}
baselines.
We report model size (MB), latency (ms), throughput (FPS), and
perplexity (WikiText-2).

\begin{table*}[t]
  \centering
  \caption{On-device results: best KL-MP point vs.\ baselines on Intel
           Lunar Lake. $\uparrow$~higher is better; $\downarrow$~lower
           is better. ``--'' * The 130M model exhibited substantially higher perplexity under uniform INT4 quantization. \dag FPS and latency are not reported, as INT8 execution is not supported for the certain relevant operations on Intel Lunar Lake}
  \label{tab:ondevice_summary}
  \small
  \resizebox{0.75\linewidth}{!}{%
  \begin{tabular}{llccccc}
    \toprule
    Model & Config & PPL $\downarrow$ & Size & FPS $\uparrow$ & Latency $\downarrow$ \\
    \midrule
    \multirow{4}{*}{\shortstack[l]{Mamba-130M\\(CPU)}}
      & FP16                      & 21.59 & 493\,MB & 22.6 & 44\,ms     \\
      & INT8                      & 25.32 & 126\,MB & 35.5 & 28\,ms     \\
      & INT4*                      & 3e35    &  84\,MB & 39.7 & 25\,ms     \\
      & \textbf{KL-MP p05 (ours)} & \textbf{21.61} & \textbf{136\,MB} & \textbf{35.9} & \textbf{28\,ms} \\
    \midrule
    \multirow{4}{*}{\shortstack[l]{Mamba-1.4B\\(CPU)}}
      & FP16                      & 11.22 & 5.2\,GB  & 2.6 & 384\,ms    \\
      & INT8                      & 11.25 & 1.3\,GB  & 5.1 & 196\,ms    \\
      & INT4                      & 60.55 & 723\,MB  & 5.3 & 190\,ms    \\
      & \textbf{KL-MP p05}        & \textbf{11.22} & \textbf{1.4\,GB} & \textbf{5.6} & \textbf{178\,ms} \\
    \midrule
    \multirow{3}{*}{\shortstack[l]{Mamba2-130M\\(GPU)}}
      & FP16                      & 46.45 &  81\,MB  & 0.02 & 60006\,ms \\
      & INT8\dag                      & 53.03 & 125\,MB  & --   & --         \\
      & \textbf{KL-MP p02}        & \textbf{46.46} & \textbf{45\,MB} & \textbf{0.29} & \textbf{3417\,ms} \\
    \bottomrule
  \end{tabular}
  }
\end{table*}
\paragraph{Mixed-precision configurations.}
Each configuration \textbf{$p_{k}$} ($k=1,\ldots,10$) corresponds to a
distinct KL threshold $\kappa_k$, above which layers are retained at
FP16 and below which they are compressed to INT4 (CPU) or INT8 (GPU).
Formally, given per-layer KL sensitivity scores $\{s_\ell\}_{\ell=1}^{L}$,
configuration \textbf{p$k$} assigns:
\begin{equation}
  b_\ell =
  \begin{cases}
    \text{FP16}      & \text{if } s_\ell \ge \kappa_k, \\
    \text{INT4/INT8} & \text{otherwise,}
  \end{cases}
  \label{eq:threshold}
\end{equation}
where $\kappa_1 > \kappa_2 > \cdots > \kappa_{10}$.
\textbf{p01} applies the most conservative threshold, quantizing only
the least sensitive layers; \textbf{p10} applies the most aggressive
threshold, approaching near-uniform quantization.
The configurations thus trace a smooth compression--accuracy curve
controlled entirely by the KL sensitivity scores.

\subsection{CPU Profiling: Mamba-130M and Mamba-1.4B}
\label{sec:cpu-profiling}

\Cref{fig:cpu_ppl_mem,fig:cpu_lat_mem,fig:cpu_fps_mem} show perplexity,
latency, and throughput plotted against model size for both CPU models.
KL-guided quantization progressively compresses Mamba-130M from
493\,MB to 84\,MB ($5.9\times$) and Mamba-1.4B from 5.2\,GB to
723\,MB ($7.2\times$), matching Uniform INT4 at the most aggressive
setting.
Perplexity remains near the FP16 baseline through \textbf{p06} for
Mamba-130M and \textbf{p09} for Mamba-1.4B; beyond these thresholds,
a sharp sensitivity cliff emerges---consistent with the ablation
finding that a single high-sensitivity layer can dominate the entire
quantization error budget (\cref{sec:hymba-sensitivity}).
\textbf{p05} represents the best operating point for both models,
achieving INT8-level or better throughput (35.9 vs.\ 35.5\,FPS;
5.6 vs.\ 5.1\,FPS) with near-FP16 perplexity, and delivering the
lowest observed latency for Mamba-1.4B (178\,ms)---below both Uniform
INT8 (196\,ms) and INT4 (190\,ms).

\begin{figure}[t]
  \centering
  \includegraphics[width=\linewidth]{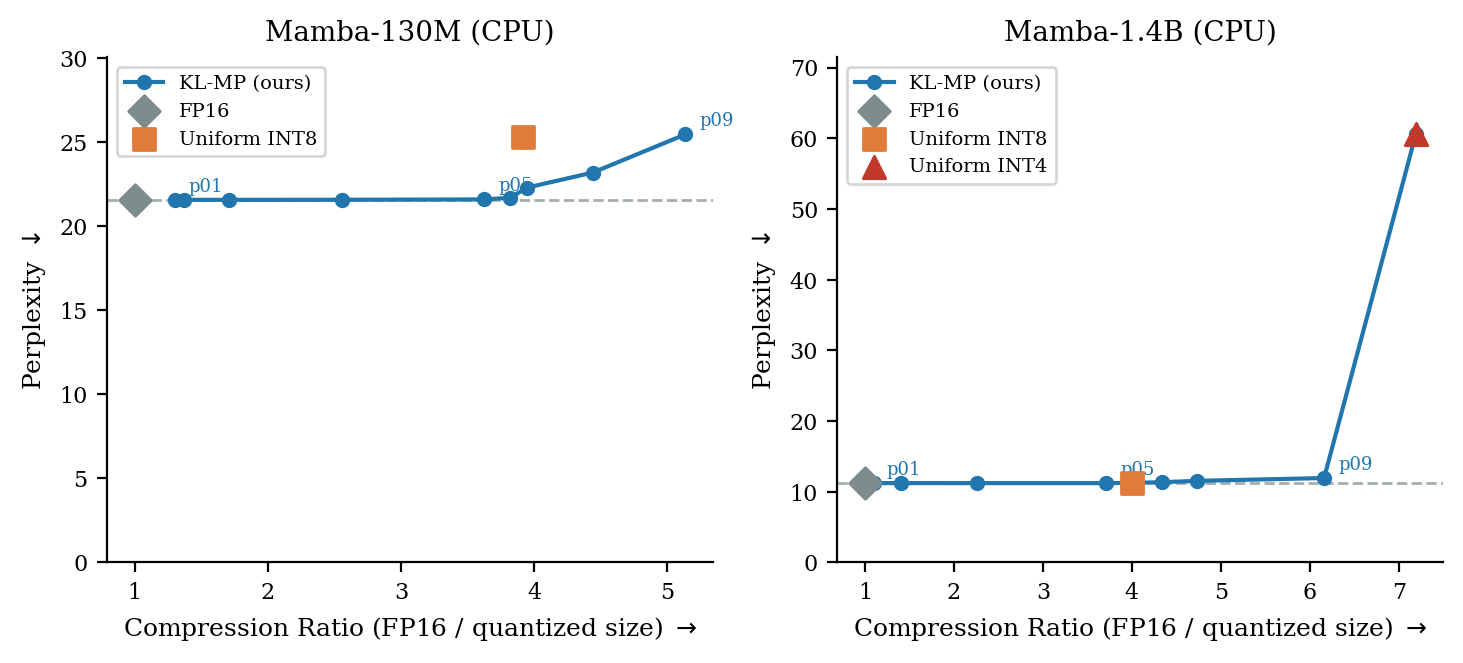}
  \caption{Perplexity vs.\ model size (CPU, Intel Lunar Lake). The
           x-axis is inverted so more compressed models appear to the
           right. KL-MP maintains near-FP16 accuracy across a wide
           compression range. The sharp cliff at \textbf{p10} for
           Mamba-130M (diverged) and \textbf{p10} for Mamba-1.4B
           ($\mathrm{PPL}=60.6$) reflects the quantization of a single
           high-sensitivity layer identified in our ablation
           (\cref{sec:hymba-sensitivity}); Uniform INT4 crosses this
           boundary by construction.}
  \label{fig:cpu_ppl_mem}
\end{figure}

\begin{figure}[t]
  \centering
  \includegraphics[width=\linewidth]{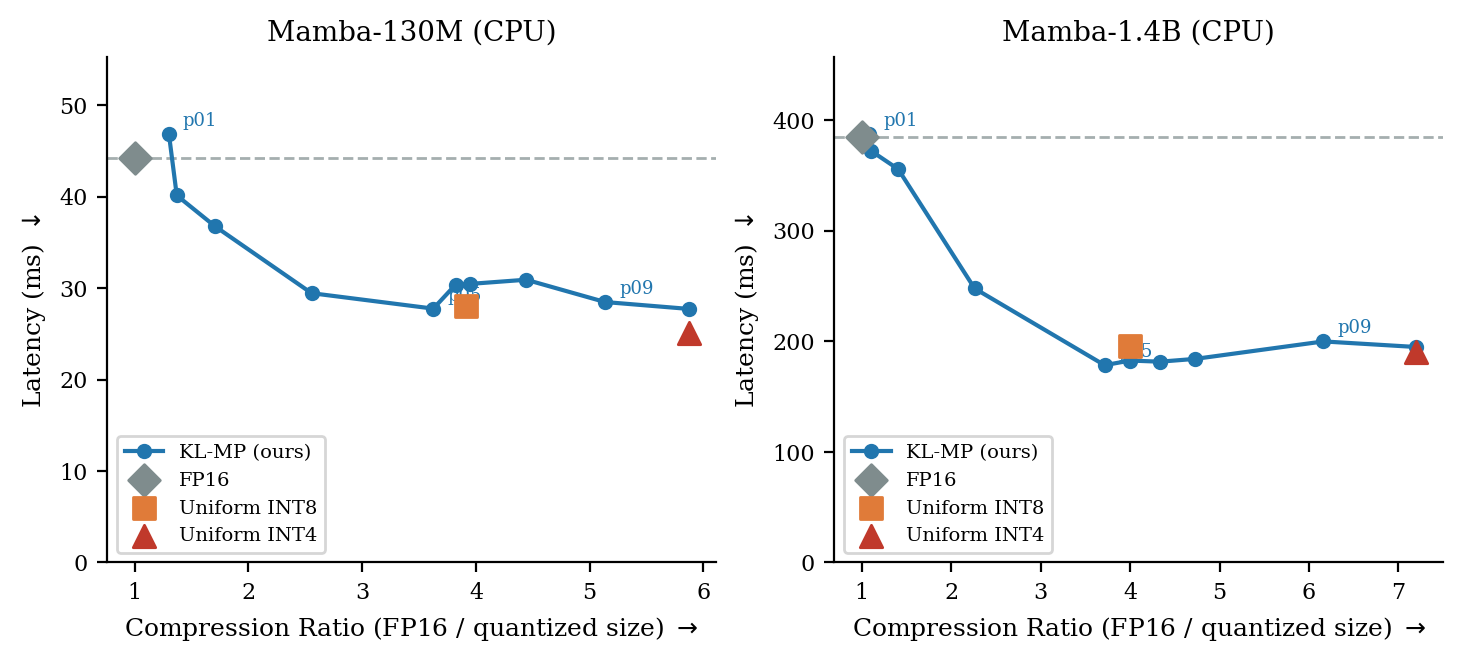}
  \caption{Inference latency vs.\ model size (CPU, Intel Lunar Lake).
           KL-MP \textbf{p05} achieves the lowest latency for
           Mamba-1.4B (178\,ms), undercutting both Uniform INT8 and
           INT4, demonstrating that sensitivity-guided precision
           allocation reduces not only memory but also compute overhead.}
  \label{fig:cpu_lat_mem}
\end{figure}

\begin{figure}[t]
  \centering
  \includegraphics[width=\linewidth]{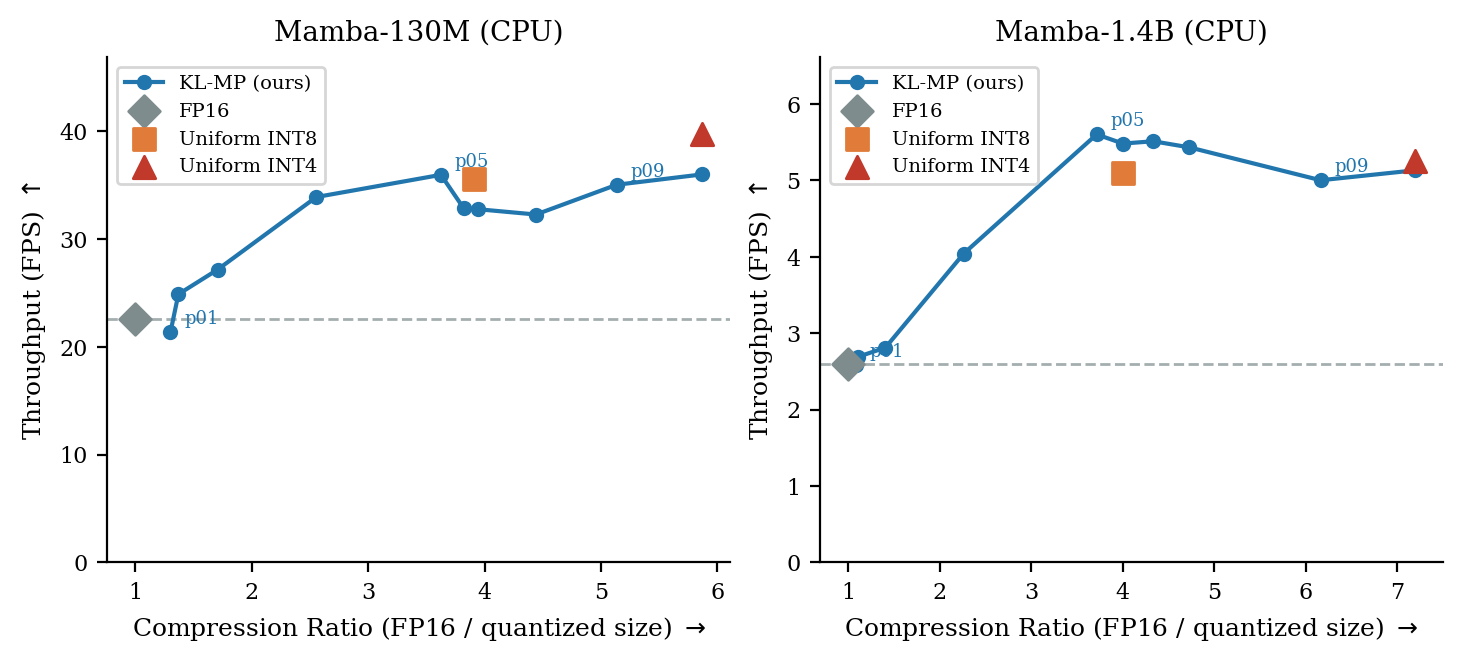}
  \caption{Throughput vs.\ model size (CPU, Intel Lunar Lake). KL-MP
           \textbf{p05} matches or exceeds INT4 throughput for both
           models while preserving near-FP16 perplexity, occupying a
           region of the accuracy--efficiency space inaccessible to
           homogeneous quantization.}
  \label{fig:cpu_fps_mem}
\end{figure}

\subsection{GPU Profiling: Mamba2-130M}
\label{sec:gpu-profiling}

\Cref{fig:gpu_combined} reports GPU results as two complementary views:
perplexity vs.\ throughput (left) and perplexity vs.\ size.

\paragraph{Why GPU memory savings are negligible.}
Unlike the CPU models, KL-MP configurations cluster around
$\approx$45\,MB regardless of $k$, versus 81\,MB for the FP16 baseline.
This is a consequence of two pipeline constraints: (1) the XAMBA
exporter applies \texttt{compress\_to\_fp16=True} by default, so the
FP16 baseline is already half the FP32 size before any further
quantization; and (2) the GPU pipeline applies INT8 rather than INT4,
yielding only a $2\times$ reduction over FP16 on the quantized
projection layers, while the non-quantizable embedding table dominates
the file size throughout.
Consequently, efficiency gains on GPU manifest as \textbf{latency and
throughput improvements} from faster INT8 arithmetic---not memory
reduction.

\paragraph{Latency and accuracy.}
The FP16 baseline incurs $60{,}006$\,ms due to iGPU memory-transfer
overhead, while KL-MP reduces this by up to $\mathbf{17.6\times}$
(\textbf{p02}: 3{,}417\,ms, 0.29\,FPS).
Perplexity holds at $46.45$--$46.48$ for \textbf{p01}--\textbf{p09},
matching the FP16 baseline; only \textbf{p10} crosses the precision
cliff to $53.03$, identical to Uniform INT8. This confirms that the
KL top-$k$ selection correctly identifies the critical
layer boundary.

\begin{figure}[t]
  \centering
  \includegraphics[width=\linewidth]{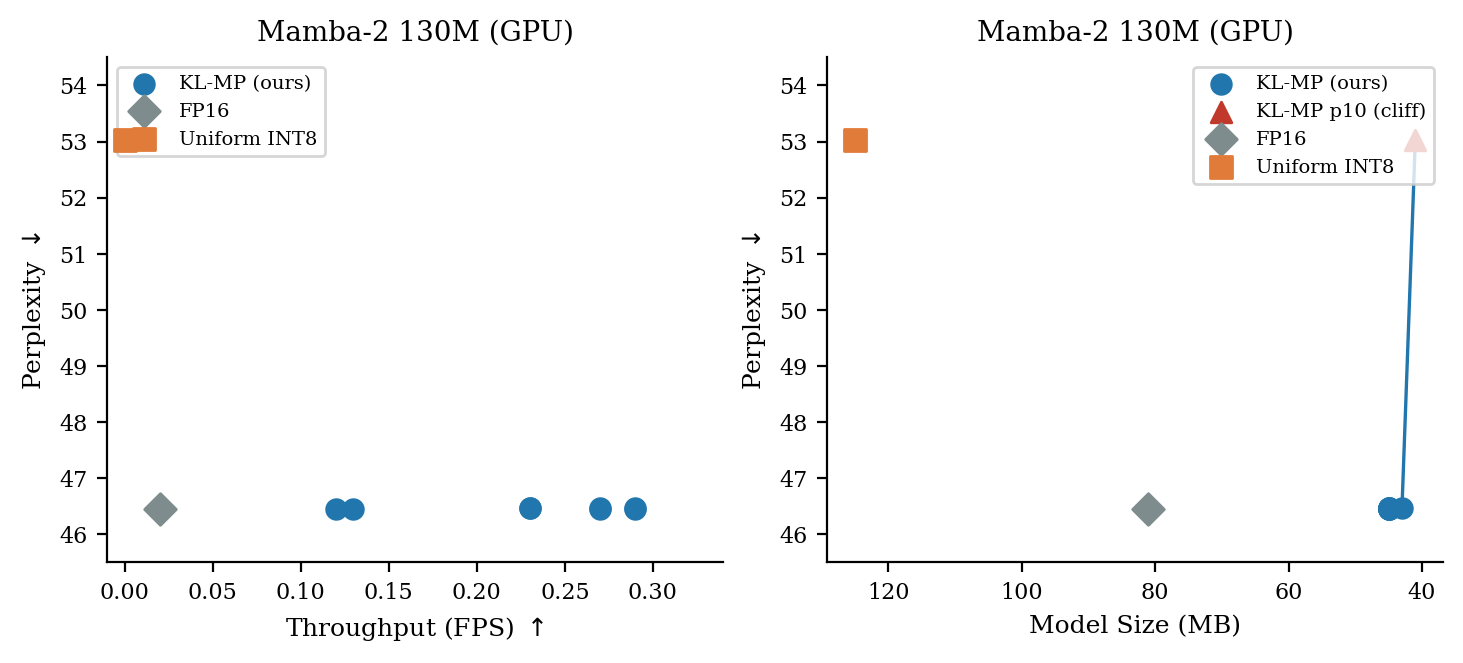}
  \caption{GPU profiling of Mamba2-130M on Intel Lunar Lake.
           \textbf{(Left)} Perplexity vs.\ throughput: KL-MP
           configurations \textbf{p01}--\textbf{p09} achieve up to
           $14.5\times$ throughput improvement over FP16 with
           negligible perplexity cost.
           \textbf{(Right)} Perplexity vs.\ model size: file sizes
           cluster near 45\,MB due to FP16 export compression and
           INT8-only quantization; only \textbf{p10} (red triangle)
           crosses the precision cliff, collapsing to Uniform INT8
           perplexity.}
  \label{fig:gpu_combined}
\end{figure}

\section{Conclusion}
\label{sec:conclusion}

In this work, we introduced a practical and scalable sensitivity analysis framework tailored specifically for hybrid SSM-Transformer models, enabling efficient mixed-precision quantization without requiring backpropagation or retraining. By systematically evaluating quantization sensitivity through forward-pass metrics, we identified critical layers where quantization-induced errors significantly impact model performance. Our results on language modeling tasks demonstrate that selectively maintaining higher precision in these sensitive layers effectively mitigates accuracy degradation, achieving up to $5.9$–$7.2\times$ reduction in model size with minimal perplexity loss. Future directions include extending our analysis framework to broader hybrid architectures and exploring dynamic quantization strategies that adapt precision allocation during inference, further enhancing model efficiency on resource-constrained edge devices. Real-world profiling on Intel Lunar Lake confirms these findings end-to-end: KL-guided mixed-precision reduces Mamba-1.4B from 5.2\,GB to 1.4\,GB with no measurable perplexity loss, and cuts Mamba2-130M GPU latency by up to 17.6$\times$ over the FP16 baseline.

\section*{Acknowledgments}
The authors gratefully acknowledge Intel Corporation for donating the hardware used in this research and for the XAMBA framework~\cite{xamba2024}, which informed our approach to efficient SSM deployment. This work has been funded in part by NSF, with award numbers \#2112665, \#2112167, \#2003279, \#2120019, \#2211386, \#2052809, \#1911095 and in part by PRISM and CoCoSys, centers in JUMP 2.0, an SRC program sponsored by DARPA.

\newpage
{
    \small
    \bibliographystyle{ieeenat_fullname}
    \bibliography{main}
}

\newpage
\lstset{
  basicstyle=\ttfamily\footnotesize,
  columns=fullflexible,
  keepspaces=true,
  breaklines=true,
  breakatwhitespace=true,
  frame=single,
  backgroundcolor=\color{gray!8},
  commentstyle=\color{gray},
  keywordstyle=\color{cvprblue},
  aboveskip=6pt,
  belowskip=6pt
}
\appendix
\section{On-Device Profiling: Reproducibility Guide}
\label{app:profiling}

All profiling scripts target Intel Lunar Lake via OpenVINO.
The pipeline runs in three stages.

\subsection*{Step 1 — Convert and Quantize}

\begin{lstlisting}[language=bash]
python convert.py             # base models
python quantize_uniform.py    # INT8 / INT4
python quantize_mixed.py      # KL M-P (CPU)
python quantize_mixed_gpu.py  # KL M-P (GPU)
\end{lstlisting}

\noindent\textbf{\texttt{convert.py}} exports each Hugging Face Mamba and
Mamba-2 checkpoint to an OpenVINO FP16 IR (\texttt{.xml}/\texttt{.bin})
using the OpenVINO Model Conversion API.
The resulting FP16 baselines are written to \path{ov_models/} and serve as
the starting point for all quantization scripts.

\noindent\textbf{\texttt{quantize\_uniform.py}} produces two fixed-precision
endpoints for each model: a fully INT8-symmetric model and a fully
INT4-symmetric model, both quantized per-channel with no sensitivity
guidance.
These bracket the Pareto curve — uniform INT8 gives the highest quality at
larger model size, and uniform INT4 gives the smallest size at lowest
quality — and serve as baselines for comparing the mixed-precision
configurations in between.
SSM \texttt{conv1d} layers (OpenVINO type \texttt{Convolution}) are excluded
from quantization in all scripts; for Mamba-2 the XAMBA CumBA MatMul node
is additionally excluded because it has no INT8/INT4 GPU kernel on Intel
Lunar Lake iGPU.

\noindent\textbf{\texttt{quantize\_mixed.py}} implements our KL-guided
mixed-precision method for the CPU pipeline, targeting Mamba-130M and
Mamba-1.4B.
It merges the per-layer 4-bit and 8-bit KL sensitivity data into a single
list, sorting entries from least to most sensitive.
Each layer therefore appears twice — once for its 4-bit cost and once for its
8-bit cost — with last-wins semantics resolving ties.
Ten evenly-spaced cutoff points are drawn through this merged list.
At each cutoff, a two-pass NNCF compression is applied to the FP16 baseline:
Pass~1 compresses the designated layers to INT4\_SYM; Pass~2 compresses a
separate set to INT8\_SYM; remaining layers stay FP16.

\noindent\textbf{\texttt{quantize\_mixed\_gpu.py}} implements the GPU
pipeline, which is restricted to INT8/FP16 mixed precision because Intel
Lunar Lake iGPU does not expose INT4 weight kernels.
It loads only the 8-bit sensitivity data, sorts layers from least to most
sensitive, and applies a single-pass INT8\_SYM compression that ignores the
sensitive tail.
Mamba-2 is capped at eight points (\texttt{point01}--\texttt{point08})
because the surrounding INT8 context at higher compression levels causes
oneDNN to fail its primitive descriptor for the XAMBA CumBA MatMul node,
even when that node is explicitly excluded from quantization.

The sensitivity metric is controlled by a constant inside each script.
All output filenames automatically inherit the selected tag
(e.g., \path{mamba-130m-hf_kl_point05.xml}):

\begin{lstlisting}[language=Python]
SENSITIVITY_METRIC = "kl_student_to_teacher"
METRIC_TAG = "kl"
\end{lstlisting}

The ten configurations \textbf{p01--p10} are produced by splitting the
sensitivity-ranked layer list into equal segments.
Configuration \textbf{p01} quantizes only the least sensitive layers,
while \textbf{p10} approaches near-uniform quantization.

\subsection*{Step 2 — Benchmark}

\begin{lstlisting}[language=bash]
python benchmark.py      # CPU latency + throughput
python benchmark_gpu.py  # GPU latency + throughput
\end{lstlisting}

Each model is evaluated using the OpenVINO
\texttt{benchmark\_app} tool:

\begin{lstlisting}[language=bash]
benchmark_app -m <model>.xml \
  -d CPU -hint latency \
  -t 60 -niter 50 \
  --inference_only TRUE
\end{lstlisting}

The flags \texttt{-t 60} and \texttt{-niter 50} ensure that each run
executes for at least 60 seconds and at least 50 iterations,
whichever takes longer.

Results are written to \path{log/benchmark_log/} as
per-model text logs and a summary CSV file:
\path{latency_throughput_{device}_{tag}_report.csv}.

\begin{lstlisting}[language=bash]
python eval_perplexity.py      # CPU, WikiText-2 PPL
python eval_perplexity_gpu.py  # GPU, WikiText-2 PPL
\end{lstlisting}

Perplexity is computed on the WikiText-2 test set.
Fake quantization is applied to replicate each configuration's
weight precision during evaluation.

\begin{lstlisting}[language=Python]
ds = load_dataset(
    "wikitext", "wikitext-2-raw-v1", split="test")
text = "\n\n".join(
    t for t in ds["text"] if t.strip())
\end{lstlisting}

Results are stored in
\path{perplexity_results_{tag}.json} with the structure

\begin{lstlisting}
{model: {point: perplexity}}
\end{lstlisting}

This format allows direct comparison across sensitivity
metrics and model scales.

\end{document}